\DeclareMathOperator*{\argmin}{arg\,min}
\newtheorem{lemma}{Lemma}
\newtheorem{remark}{Remark}
\def\BibTeX{{\rm B\kern-.05em{\sc i\kern-.025em b}\kern-.08em
    T\kern-.1667em\lower.7ex\hbox{E}\kern-.125emX}}
\begin{document}

\title{PtyGenography:  using generative models for regularization of the phase retrieval problem
\thanks{The authors would like to thank the Lorentz Center for their support during PRiMA workshop, where this project has been initiated. PS is supported by NWO Talent program Veni ENW grant, file number VI.Veni.212.176.}
}

\author{\IEEEauthorblockN{Selin Aslan}
\IEEEauthorblockA{\textit{Department of Mathematics} \\
\textit{Ko\c{c} University}\\
saslan@ku.edu.tr}
\and
\IEEEauthorblockN{Tristan van Leeuwen}
\IEEEauthorblockA{\textit{Computational Imaging} \\
\textit{Centrum Wiskunde \& Informatica (CWI)}\\
t.van.Leeuwen@cwi.nl}
\and
\IEEEauthorblockN{Allard Mosk}
\IEEEauthorblockA{\textit{Debye Institute}\\
\textit{\& Department of Physics} \\
\textit{Utrecht University}\\
a.p.mosk@uu.nl}
\and
\IEEEauthorblockN{Palina Salanevich}
\IEEEauthorblockA{\textit{Mathematical Institute} \\
\textit{Utrecht University}\\
p.salanevich@uu.nl}
}

\maketitle

\begin{abstract}
In phase retrieval and similar inverse problems, the stability of solutions across different noise levels is crucial for applications. One approach to promote it is using signal priors in a form of a generative model as a regularization, at the expense of introducing a bias in the reconstruction. In this paper, we explore and compare the reconstruction properties of classical and generative inverse problem formulations. We propose a new unified reconstruction approach that mitigates overfitting to the generative model for varying noise levels.
\end{abstract}

\begin{IEEEkeywords}
phase retrieval, inverse problems, regularization, generative priors
\end{IEEEkeywords}

\section{Introduction}

In many signal processing applications in imaging, acoustics, and optics, we aim to reconstruct an unknown signal from its intensity measurements. This inverse problem is known as the \emph{phase retrieval problem}. In some applications, such as diffraction imaging \cite{millane1990phase, bunk2007diffractive} and ptychography \cite{rodenburg2008ptychography}, it arises due to the limitations of the measurement process. Namely, the detectors are unable to capture the phases of the measurements, but only their magnitudes. In other applications, including audio processing \cite{rabiner1999fundamentals, balan2006signal, schulze2021sparse}, the phases of the measurements may be too noisy to be used in the reconstruction process. 

Current technological advances and the pressing need for fast, high-fidelity methods in imaging and audio processing inspired active interest in phase retrieval. In recent years, substantial progress has been made in the mathematical foundations of this problem~\cite{Grohs2020, Dong2023}. However, there is still a significant gap between the available theoretical reconstruction guarantees and the application restrictions and requirements. As such, it is often desired to achieve stable reconstruction while facing high measurement noise and a limited measurement budget that is smaller than the minimal number of measurements theoretically required for reconstruction~\cite{heinosaari2013quantum}.

More formally, we consider a non-linear inverse problem of retrieving ${\bf f}\in \mathbb{C}^n$ from noisy measurements ${\bf y} = \mathcal{A}({\bf f}) + \boldsymbol{\varepsilon}$, where $\mathcal{A}\colon \mathbb{C}^n \to \mathbb{C}^m$ is the measurement map, and $\boldsymbol{\varepsilon}\in \mathbb{C}^m$ is the noise term.
In the case of the phase retrieval problem, the measurement map is defined as $\mathcal{A}({\bf f}) = \vert A {\bf f} \vert^2$, where ${A\in\mathbb{C}^{m\times n}}$ is a given measurement matrix and $|\cdot|^2$ denotes the element-wise squared absolute value of a vector. Note that in this paper, we only consider the additive noise model, which is typical for detector noise and does not cover Poisson shot noise.
Assuming $\mathcal{A}$ is injective, one can formulate the reconstruction problem as
\begin{equation}\label{eq: conventional}
\min_{{\bf f}} \|\mathcal{A}({\bf f})-{\bf y}\|_2^2.
\end{equation}
To account for instabilities in the presence of noise, one often adds a regularization term. The most common classical regularization approach is Tikhonov regularization, which is theoretically well-understood~\cite{vasin2006some}. 
However, it tends to smooth high-frequency components, which can be problematic when trying to capture detailed signal features.

An alternative budding approach to enforce stability in the presence of noise is incorporating the available prior information on the signal into the reconstruction process. This reduces the set of admissible signals and thus can drastically improve reconstruction accuracy and the number of phaseless measurements required for reconstruction.

Inspired by the advances in the field of compressive sensing, various works have explored sparsity priors on the signal~\cite{eldar2014sparse, jaganathan2017sparse, Grohs2020}. However, while in many applications, the set of admissible signals has a lower-dimensional structure, it is not always the case that it can be approximated by a sparse representation. Following the recent remarkable success of neural networks in phase retrieval applications~\cite{deng2020learning, rivenson2018phase, sinha2017lensless}, a new framework for modeling the set of admissible signals as the range $\{{\bf f} = \mathcal{G}({\bf z})\colon {\bf z}\in \mathbb{R}^k\}$ of a (deep) generative neural network~$\mathcal{G}\colon \mathbb{C}^k \to \mathbb{C}^n$ with $k\ll n$ was proposed in~\cite{hand2018phase}. In this paper, the authors studied the case of a generic measurement frame and an untrained neural network~$\mathcal{G}$ with random parameters. In the case of structured, application-relevant measurements, the injectivity of phase retrieval from power spectra (squared absolute values of the oversampled Fourier measurements) was recently established in a more general setting of (generic) semi-algebraic generative models~\cite{bendory2023phase}.

\smallskip

Expressing the unknown signal as ${\bf f = \mathcal{G}({\bf z})}$, the problem \eqref{eq: conventional} becomes
\begin{equation}\label{eq: generative}
\min_{\bf z} \|\mathcal{A}\circ \mathcal{G}({\bf z})-{\bf y}\|_2^2.
\end{equation}
The rationale here is that the conditioning of $\mathcal{A}\circ\mathcal{G}$ is more favorable than that of $\mathcal{A}$, at the expense of introducing a bias in the reconstruction. It has indeed been observed in numerical experiments that for high signal-to-noise ratio levels formulation~\eqref{eq: conventional} performs better, while in the case of low signal-to-noise ratio formulation~\eqref{eq: generative} outperforms it \cite{Seifert2024}.

In this paper, we aim to characterize the reconstruction error of \eqref{eq: conventional} and \eqref{eq: generative} in terms of the bias $\min_{\bf z} \|\mathcal{G}({\bf z})- {\bf f}\|_2$ and variance $\Vert\boldsymbol{\varepsilon}\Vert_2$ and propose a practical way to detect if a significant bias has been introduced by the generative model. We furthermore propose a unified variational approach that bridges the two formulations \eqref{eq: conventional} and \eqref{eq: generative} and performs well for different noise levels.

The remaining part of this paper is organized as follows. In Section~\ref{sec: reconstruction error}, we study the recovery properties of the two inverse problem formulations~\eqref{eq: conventional}~and~\eqref{eq: generative} and discuss the role of the bias introduced by a generative model in Section~\ref{sec: bias}. In Section~\ref{sec: unified approach} we propose and evaluate a unified approach and present numerical results supporting our findings in Section~\ref{results}. Finally, Section~\ref{conclusion} contains the discussion of further research directions.

\section{Reconstruction approach}
\label{method}
Let us assume that both the measurement map $\mathcal{A}$ and the generative model $\mathcal{G}$ are injective and bi-Lipschitz with constants $\alpha, \beta\geq 1$. That is, for any ${\bf f}, {\bf f'}\in \mathbb{C}^n$,
\[
\alpha^{-1}\|{\bf f}-{\bf f'}\|_2 \leq \|\mathcal{A}({\bf f}) - \mathcal{A}({\bf f'})\|_2 \leq \alpha \|{\bf f}-{\bf f'}\|_2,
\]
and likewise for $\mathcal{G}$, with $\beta$ in place of $\alpha$.
\begin{remark}
    Note that in the case of the phase retrieval problem, that is, when ${\mathcal{A}({\bf f}) = \vert A {\bf f} \vert^2}$, for a matrix $A\in\mathbb{C}^{m\times n}$, the measurement map $\mathcal{A}$ satisfies the (bi-)Lipschitz property whenever it is injective~\cite{balan2016lipschitz}. However, for most matrices $A$, no bound on the Lipschitz constant $\alpha$ is known. One exception to this is phase retrieval from locally supported measurements~\cite{iwen2019lower}.
\end{remark}

We furthermore assume that the bi-Lipschitz constant $\gamma$ of $\mathcal{A}\circ\mathcal{G}$ is more favorable than that of $\mathcal{A}$, that is, $\gamma < \alpha$. In other words, $\mathcal{G}$ effectively regularities the inverse problem. Moreover, we assume that $\mathcal{G}$ is well-conditioned, in the sense that $0 < \beta - 1 \ll 1$.

\subsection{Characterizing the reconstruction error}\label{sec: reconstruction error}
We denote by ${\bf f_0}$ the ground-truth signal we aim to recover from the measurements
\[
{\bf y} = \mathcal{A}({\bf f_0}) + \boldsymbol{\varepsilon},
\]
where $\boldsymbol{\varepsilon}$ is a bounded noise term. We can now formulate the following results regarding the reconstructing errors resulting from~\eqref{eq: conventional} and~\eqref{eq: generative}.

\begin{lemma}[]
Let ${\bf \tilde{f}} = \argmin_{{\bf f}} \|\mathcal{A}({\bf f})-{\bf y}\|_2$ with ${\bf y} = \mathcal{A}({\bf f_0}) + \boldsymbol{\varepsilon}$. Then the reconstruction error is given by 
\[
\|{\bf \tilde{f}} - {\bf f_0}\|_2 \leq 2\alpha \|\boldsymbol{\varepsilon}\|_2.
\]
\end{lemma}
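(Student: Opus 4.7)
The plan is to combine the optimality of $\tilde{\bf f}$ with the lower bi-Lipschitz bound on $\mathcal{A}$ via a triangle inequality; this is a routine three-line argument.

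First, I would use the defining optimality of $\tilde{\bf f}$: since it minimizes the residual, its value at $\tilde{\bf f}$ cannot exceed the value at ${\bf f_0}$, which gives
\[
\|\mathcal{A}(\tilde{\bf f}) - {\bf y}\|_2 \leq \|\mathcal{A}({\bf f_0}) - {\bf y}\|_2 = \|\boldsymbol{\varepsilon}\|_2.
\]

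Next, I would split $\mathcal{A}(\tilde{\bf f}) - \mathcal{A}({\bf f_0})$ through ${\bf y}$ and apply the triangle inequality:
\[
\|\mathcal{A}(\tilde{\bf f}) - \mathcal{A}({\bf f_0})\|_2 \leq \|\mathcal{A}(\tilde{\bf f}) - {\bf y}\|_2 + \|{\bf y} - \mathcal{A}({\bf f_0})\|_2 \leq 2\|\boldsymbol{\varepsilon}\|_2.
\]
Finally, I would invoke the lower bi-Lipschitz bound on $\mathcal{A}$ stated at the start of Section~\ref{method}, namely $\alpha^{-1}\|{\bf f} - {\bf f'}\|_2 \leq \|\mathcal{A}({\bf f}) - \mathcal{A}({\bf f'})\|_2$, to convert the measurement-domain bound into a signal-domain bound:
\[
\|\tilde{\bf f} - {\bf f_0}\|_2 \leq \alpha \|\mathcal{A}(\tilde{\bf f}) - \mathcal{A}({\bf f_0})\|_2 \leq 2\alpha\|\boldsymbol{\varepsilon}\|_2.
\]

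There is no real obstacle here; the statement is essentially a textbook stability estimate for a bi-Lipschitz inverse problem. The only subtlety worth flagging is that existence of the minimizer $\tilde{\bf f}$ is implicitly assumed (for phase retrieval, injectivity together with the bi-Lipschitz assumption ensures that $\mathcal{A}$ has closed range and a measurable left inverse, so $\argmin$ is attained), and the argument does not use any smoothness of $\mathcal{A}$ beyond the two-sided Lipschitz estimate---in particular it makes no use of the conditioning of $\mathcal{G}$, which is consistent with the fact that this lemma treats the classical formulation~\eqref{eq: conventional} only.
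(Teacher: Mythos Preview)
Your proof is correct and essentially the same as the paper's: both use optimality of $\tilde{\bf f}$ at ${\bf f_0}$ to bound the residual by $\|\boldsymbol{\varepsilon}\|_2$, then combine a triangle inequality with the lower bi-Lipschitz bound on $\mathcal{A}$. The only cosmetic difference is that the paper phrases the second step as a reverse triangle inequality lower-bounding $\|\mathcal{A}(\tilde{\bf f})-{\bf y}\|_2$, whereas you use the forward triangle inequality to upper-bound $\|\mathcal{A}(\tilde{\bf f})-\mathcal{A}({\bf f_0})\|_2$; these are equivalent rearrangements of the same inequality.
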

\begin{proof}
First, note that for any ${\bf f}\in \mathbb{C}^n$,
\[
\|\mathcal{A}({\bf \tilde{f}}) - {\bf y}\|_2 \leq \|\mathcal{A}({\bf f}) - {\bf y}\|_2 \leq \|\mathcal{A}({\bf f})-\mathcal{A}({\bf f_0})\|_2 + \|\boldsymbol{\varepsilon}\|_2.
\]
By letting ${\bf f} = \bf{f_0}$, we get
\[
\|\mathcal{A}({\bf \tilde{f}}) - {\bf y}\|_2 \leq \|\boldsymbol{\varepsilon}\|_2.
\]
Now, using the reverse triangle inequality and the bi-Lipschitz property of $\mathcal{A}$, we obtain
\begin{equation*}
\begin{split}
\|\mathcal{A}({\bf \tilde{f}}) - {\bf y}\|_2 & = \|\mathcal{A}({\bf \tilde{f}}) - \mathcal{A}({\bf f_0}) - \boldsymbol{\varepsilon}\|_2 \\& \geq \|\mathcal{A}({\bf \tilde{f}})-\mathcal{A}({\bf f_0})\|_2-\|\boldsymbol{\varepsilon}\|_2 \\& \geq \alpha^{-1}\|{\bf \tilde{f}}-{\bf f_0}\|_2 - \|\boldsymbol{\varepsilon}\|_2.
\end{split}
\end{equation*}
Combining the two inequalities yields
\[
\alpha^{-1}\|{\bf \tilde{f}}-{\bf f_0}\|_2 \leq 2\|\boldsymbol{\varepsilon}\|_2,
\]
which gives the desired result.
\end{proof}

\begin{lemma}\label{lem: generative bound}
Define ${\bf \tilde{z}} = \argmin_{\bf z} \|\mathcal{A}\circ\mathcal{G}({\bf z}) - {\bf y}\|_2$ with ${{\bf y} = \mathcal{A}({\bf f_0})+\boldsymbol{\varepsilon}}$, and ${\bf\tilde{f}}=\mathcal{G}({\bf\tilde{z}})$. Then the reconstruction error is bounded by 
\[
\|{\bf \tilde{f}} - {\bf f_0}\|_2 \leq (1 + 2\alpha\beta\gamma) \|\mathcal{G}({\bf z_0}) - {\bf f_0}\|_2 + 2\beta\gamma\|\boldsymbol{\varepsilon}\|_2,
\]
where ${\bf z_0} = \argmin_{\bf z} \|\mathcal{G}({\bf z})- {\bf f_0}\|_2$.
\end{lemma}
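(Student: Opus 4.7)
The plan is to mimic the structure of the previous lemma but with the additional wrinkle that $\tilde{\bf f}=\mathcal{G}(\tilde{\bf z})$ only reaches the range of $\mathcal{G}$, so the triangle inequality must be split through the best approximation $\mathcal{G}({\bf z_0})$. Concretely, I would first write
\[
\|\tilde{\bf f} - {\bf f_0}\|_2 \le \|\mathcal{G}(\tilde{\bf z}) - \mathcal{G}({\bf z_0})\|_2 + \|\mathcal{G}({\bf z_0}) - {\bf f_0}\|_2,
\]
so the second term is exactly the bias $\|\mathcal{G}({\bf z_0})-{\bf f_0}\|_2$, and all work reduces to controlling $\|\mathcal{G}(\tilde{\bf z}) - \mathcal{G}({\bf z_0})\|_2$ in the range of $\mathcal{G}$.

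The next step is to pull the $\mathcal{G}$-difference back to the latent variables using the bi-Lipschitz constant $\beta$, and then push forward by $\mathcal{A}$ using the bi-Lipschitz constant $\gamma$ of $\mathcal{A}\circ\mathcal{G}$. That is,
\[
\|\mathcal{G}(\tilde{\bf z}) - \mathcal{G}({\bf z_0})\|_2 \le \beta\|\tilde{\bf z}-{\bf z_0}\|_2 \le \beta\gamma\|\mathcal{A}\!\circ\!\mathcal{G}(\tilde{\bf z}) - \mathcal{A}\!\circ\!\mathcal{G}({\bf z_0})\|_2.
\]
To estimate the right-hand side, insert ${\bf y}$ via the triangle inequality and invoke the optimality of $\tilde{\bf z}$, which gives $\|\mathcal{A}\!\circ\!\mathcal{G}(\tilde{\bf z})-{\bf y}\|_2 \le \|\mathcal{A}\!\circ\!\mathcal{G}({\bf z_0})-{\bf y}\|_2$, so both terms are bounded by $\|\mathcal{A}\!\circ\!\mathcal{G}({\bf z_0})-{\bf y}\|_2$, producing a factor of $2$.

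Finally, I would insert $\mathcal{A}({\bf f_0})$ and apply the forward Lipschitz constant $\alpha$ of $\mathcal{A}$ to bound
\[
\|\mathcal{A}\!\circ\!\mathcal{G}({\bf z_0})-{\bf y}\|_2 \le \alpha\|\mathcal{G}({\bf z_0})-{\bf f_0}\|_2 + \|\boldsymbol{\varepsilon}\|_2.
\]
Chaining these estimates produces $\|\mathcal{G}(\tilde{\bf z}) - \mathcal{G}({\bf z_0})\|_2 \le 2\alpha\beta\gamma\|\mathcal{G}({\bf z_0})-{\bf f_0}\|_2 + 2\beta\gamma\|\boldsymbol{\varepsilon}\|_2$, and adding the bias term from the initial triangle inequality gives exactly the claimed bound. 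The only subtle step is remembering to use the forward Lipschitz constant $\alpha$ of the original map $\mathcal{A}$ (not of $\mathcal{A}\circ\mathcal{G}$) when the bias is pushed through $\mathcal{A}$, since $\mathcal{G}({\bf z_0})\ne {\bf f_0}$ in general and the composed constant $\gamma$ only applies on the range of $\mathcal{G}$. There is no real obstacle beyond bookkeeping; the two optimality/triangle arguments mirror the previous lemma, and the interaction between $\alpha$, $\beta$, and $\gamma$ arises naturally from the order in which we pull back through $\mathcal{G}$, push forward through $\mathcal{A}\circ\mathcal{G}$, and then through $\mathcal{A}$.
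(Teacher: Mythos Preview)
Your argument is correct and matches the paper's proof in substance: the same triangle-inequality split through $\mathcal{G}({\bf z_0})$, the same use of the bi-Lipschitz constants $\beta$ and $\gamma$ to control $\|\tilde{\bf z}-{\bf z_0}\|_2$, and the same application of the forward Lipschitz constant $\alpha$ of $\mathcal{A}$ to the bias term. The only cosmetic difference is ordering---the paper first upper- and lower-bounds the residual $\|\mathcal{A}\circ\mathcal{G}(\tilde{\bf z})-{\bf y}\|_2$ (via optimality and the reverse triangle inequality) and then combines, whereas you insert ${\bf y}$ directly and use optimality to get the factor~$2$; the inequalities are the same.
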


\begin{proof}
The first step is to upper bound the residual in terms of the bias and variance. By definition of ${\bf y}$ and using the triangular inequality, we have that for any ${\bf z}\in \mathbb{C}^k$, 
$$\|\mathcal{A}\circ \mathcal{G}({\bf\tilde{z}}) - {\bf y}\|_2 \leq \|\mathcal{A}\circ \mathcal{G}({\bf z})-\mathcal{A}({\bf f_0})\|_2 + \| \boldsymbol{\varepsilon}\|_2.$$
Picking ${\bf z} = {\bf z_0}$ and utilizing the bi-Lipschitz property of $\mathcal{A}$ then yields
$$\|\mathcal{A}\circ \mathcal{G}({\bf\tilde{z}}) - {\bf y}\|_2 \leq \alpha \|\mathcal{G}({\bf z_0})-{\bf f_0}\|_2+\| \boldsymbol{\varepsilon}\|_2.$$

The second step is to lower-bound the residual in terms of the error $\|{\bf \tilde{z}} - {\bf z}\|_2$. We split the residual $\|\mathcal{A}\circ \mathcal{G}({\bf\tilde{z}})-{\bf y}\|_2$ as
$$\|\mathcal{A}\circ \mathcal{G}({\bf \tilde{z}}) - \mathcal{A}\circ \mathcal{G}({\bf z_0}) + \mathcal{A}\circ \mathcal{G}({\bf z_0}) - \mathcal{A}({\bf f_0}) - \boldsymbol{\varepsilon}\|_2,$$
and apply the reverse triangle inequality to obtain
\begin{equation*}
    \begin{split}
        \|\mathcal{A}\circ \mathcal{G}({\bf \tilde{z}}) - {\bf y}\|_2 & \geq \|\mathcal{A}\circ \mathcal{G}({\bf\tilde{z}}) - \mathcal{A}\circ \mathcal{G}({\bf z_0})\|_2  \\& - \|\mathcal{A}\circ \mathcal{G}({\bf z_0}) - \mathcal{A}({\bf f_0})\|_2-\|\boldsymbol{\varepsilon}\|_2.
    \end{split}
\end{equation*}
Using bi-Lipschitz property of $\mathcal{A}\circ \mathcal{G}$ and $\mathcal{A}$ and moving terms around we find
$$\gamma^{-1}\|{\bf \tilde{z}} -{\bf z_0}\|_2 \leq  \|\mathcal{A}\circ \mathcal{G}({\bf \tilde{z}}) - {\bf y}\|_2 +  \alpha\|\mathcal{G}({\bf z_0}) -{\bf f_0}\|_2 + \|\boldsymbol{\varepsilon}\|_2.$$

The third step is to split the error $\|{\bf\tilde{f}} - {\bf f_0}\|_2$ and apply the triangle inequality
\begin{equation*}
    \begin{split}
        \|{\bf\tilde{f}} - {\bf f_0}\|_2 = & \|\mathcal{G}({\bf \tilde{z}})-\mathcal{G}({\bf z_0}) + \mathcal{G}({\bf z_0}) - {\bf f_0}\|_2 \\& \leq \|\mathcal{G}({\bf\tilde{z}}) - \mathcal{G}({\bf z_0})\|_2+ \|\mathcal{G}({\bf z_0}) - {\bf f_0}\|_2.
    \end{split}
\end{equation*}
The bi-Lipschitz property of $\mathcal{G}$ then yields
$$\|{\bf\tilde{f}} - {\bf f_0}\|_2\leq \beta \|{\bf \tilde{z}} - {\bf z_0}\|_2 + \|\mathcal{G}({\bf z_0}) - {\bf f_0}\|_2.$$
The desired result follows by combing the three steps.
\end{proof}

\begin{remark}
    Under the assumptions stated above, it is not unreasonable to assume $\beta\gamma < \alpha$, so that the regularized problem indeed leads to less amplification of noise, at the expense of a bias which mainly depends on the expressiveness of the generative model. 
    More specifically, there is a trade-off between the introduced bias and the noise amplification. More restrictive generative model would have smaller $\beta$, leading to the smaller variance, but at the same time it would lead to large bias term in out-of-distribution scenarios (see Figure~\ref{fig:example1_recon}).
\end{remark}

\subsection{Detecting bias}\label{sec: bias}
Even though using regularization in the form of a generative model allows to significantly reduce the noise amplification in the scenario when the ground truth signal ${\bf f_0} = \mathcal{G}({\bf z_0})$ fits the generative model perfectly, in practice this is often not the case. In many applications, such as material science~\cite{agour2013investigation, lehmann2022characterization}, lithography~\cite{mochi2010actinic}, and circuits board manufacturing~\cite{kaya2023development}, the ground truth signal has the form ${\bf f_0} = \mathcal{G}({\bf z_0}) + {\boldsymbol{\eta}}$, where $\mathcal{G}$ models all the ``perfect'' signals and ${\boldsymbol{\eta}}$ represents the signal imperfections and manufacturing defects. In such a scenario, detecting these defects is an important part of the reconstruction problem. Therefore, given a solution ${\bf \tilde{z}} = \argmin_{\bf z} \|\mathcal{A}\circ\mathcal{G}({\bf z}) - {\bf y}\|_2$ to the optimization problem~\eqref{eq: generative}, we would like to be able to detect and ideally also quantify the bias $\|\mathcal{G}({\bf \tilde{z}})- {\bf f_0}\|_2$. 

Using the result from the previous section, we directly get bounds on the bias in terms of the residual ${\tilde\rho}=\|\mathcal{A}\circ\mathcal{G}({\bf \tilde{z}}) -{\bf y}\|_2$ and the noise level $\sigma=\|\boldsymbol{\varepsilon}\|_2$:
\begin{equation*}
 \alpha^{-1}\left({\tilde\rho} -\sigma\right)  \leq \|\mathcal{G}({\bf \tilde{z}})-{\bf f_0}\|_2  \\ \leq \alpha\left({\tilde\rho}+\sigma\right).
\end{equation*}
However, we can expect to get more refined results by exploiting further properties of the measurement map $\mathcal{A}$. 

\subsection{Towards a unified approach}\label{sec: unified approach}
To minimize the influence of the generative model bias on the reconstruction accuracy, we combine the classical formulation~\eqref{eq: conventional} and the generative formulation~\eqref{eq: generative}, and propose the following unified approach to inference using a generative model
\begin{equation}\label{eq:combined}
\min_{{\bf z},{\bf f}} \|\mathcal{A}({\bf f})-{\bf y}\|_2^2 + \lambda^2 \|\mathcal{G}({\bf z})-{\bf f}\|_2^2.
\end{equation}
Obviously, as $\lambda = 0$, we retrieve \eqref{eq: conventional}, and as $\lambda\rightarrow\infty$, we retrieve \eqref{eq: generative}.

\begin{lemma}\label{lem: unified bound}
The estimate ${\bf \tilde f}$ resulting from \eqref{eq:combined} has a reconstruction error bounded as
\[
\|{\bf \tilde f}-{\bf f_0}\|_2 \leq \lambda\alpha \|G({\bf z_0})-{\bf f_0}\|_2 +  2\alpha\|\boldsymbol{\varepsilon}\|_2
\]
\end{lemma}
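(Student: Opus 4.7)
The plan is to mirror the argument in the proof of Lemma~1, with the data residual now controlled via an optimality comparison against the reference pair $({\bf z_0},{\bf f_0})$ rather than against ${\bf f_0}$ alone. Denote the combined objective by $J({\bf z},{\bf f}):=\|\mathcal{A}({\bf f})-{\bf y}\|_2^2+\lambda^2\|\mathcal{G}({\bf z})-{\bf f}\|_2^2$, and let $({\bf \tilde z},{\bf \tilde f})$ be a minimizer.

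First, I would upper-bound the residual $\|\mathcal{A}({\bf \tilde f})-{\bf y}\|_2$ by invoking $J({\bf \tilde z},{\bf \tilde f})\leq J({\bf z_0},{\bf f_0})$. Since ${\bf y}-\mathcal{A}({\bf f_0})=\boldsymbol{\varepsilon}$, this reads
\[
\|\mathcal{A}({\bf \tilde f})-{\bf y}\|_2^2+\lambda^2\|\mathcal{G}({\bf \tilde z})-{\bf \tilde f}\|_2^2 \leq \|\boldsymbol{\varepsilon}\|_2^2+\lambda^2\|\mathcal{G}({\bf z_0})-{\bf f_0}\|_2^2.
\]
Dropping the nonnegative penalty term on the left and using $\sqrt{a^2+b^2}\leq a+b$ for nonnegative $a,b$ gives
\[
\|\mathcal{A}({\bf \tilde f})-{\bf y}\|_2 \leq \|\boldsymbol{\varepsilon}\|_2+\lambda\|\mathcal{G}({\bf z_0})-{\bf f_0}\|_2.
\]

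Second, exactly as in the proof of Lemma~1, I would rewrite ${\bf y}=\mathcal{A}({\bf f_0})+\boldsymbol{\varepsilon}$, apply the reverse triangle inequality, and invoke the bi-Lipschitz property of $\mathcal{A}$ to obtain the matching lower bound $\|\mathcal{A}({\bf \tilde f})-{\bf y}\|_2 \geq \alpha^{-1}\|{\bf \tilde f}-{\bf f_0}\|_2-\|\boldsymbol{\varepsilon}\|_2$. Chaining this with the previous display and multiplying through by $\alpha$ yields the claimed estimate.

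There is no real obstacle; the whole argument is a direct adaptation of Lemma~1 augmented by a single optimality comparison at $({\bf z_0},{\bf f_0})$. The only step worth flagging is the passage from the sum-of-squares optimality inequality to a sum-of-norms residual bound, which is lossy but keeps the final expression clean. It is precisely this loss that causes the bound to reduce to the Lemma~1 bound at $\lambda=0$ while growing linearly in $\lambda$, so it does not asymptotically recover Lemma~\ref{lem: generative bound} as $\lambda\to\infty$; obtaining a $\lambda$-uniform estimate would require also exploiting the penalty term on the left, a refinement I do not pursue here.
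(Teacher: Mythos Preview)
Your proof is correct and is essentially identical to the paper's: both use the bi-Lipschitz lower bound $\|\mathcal{A}({\bf \tilde f})-{\bf y}\|_2\geq\alpha^{-1}\|{\bf \tilde f}-{\bf f_0}\|_2-\|\boldsymbol{\varepsilon}\|_2$, then upper-bound the residual by the optimality comparison $J({\bf \tilde z},{\bf \tilde f})\leq J({\bf z_0},{\bf f_0})$ and the inequality $\sqrt{a^2+b^2}\leq |a|+|b|$. The only cosmetic difference is the order of presentation.
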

\begin{proof}
As in the previous results, the bi-Lipschitz property of the map $\mathcal{A}$ yields
\[
\alpha^{-1}\|{\bf \tilde f}-{\bf f_0}\|_2 \leq \|\mathcal{A}({\bf \tilde f})-{\bf y}\|_2 + \|\boldsymbol{\varepsilon}\|_2.
\]
Using the optimality of ${\bf \tilde{f}}$, for any ${\bf f}\in \mathbb{C}^n$, ${\bf z}\in \mathbb{C}^k$, we trivially have
\[
\|\mathcal{A}({\bf\tilde f})-{\bf y}\|_2 \leq \sqrt{\|\mathcal{A}({\bf f})-\mathcal{A}({\bf f_0})\|_2^2 + \lambda^2 \|\mathcal{G}({\bf z})-{\bf f}\|_2^2 + \|\boldsymbol{\varepsilon}\|_2^2}.
\]
Now by picking ${\bf f} = {\bf f_0}$ and ${\bf z}={\bf z_0}\equiv\min_{\bf z} \|\mathcal{G}({\bf z})-{\bf f_0}\|$ and bounding the expression above using $\sqrt{a^2+b^2} \leq |a|+|b|$, we get the desired result.
\end{proof}

\begin{remark}
To ensure optimal reconstruction, the regularization parameter $\lambda$ should be adaptively selected based on the noise intensity $\|\boldsymbol{\varepsilon}\|_2$.
If we let $\lambda \propto \|\boldsymbol{\varepsilon}\|_2$, the reconstruction error of the combined method is bounded as
\[
\|{\bf\tilde f}-{\bf f_0}\|_2 \leq C \|\boldsymbol{\varepsilon}\|_2,
\]
where the constant $C$ may depend on the bias $\|\mathcal{G}({\bf z_0})-{\bf f_0}\|_2$. In the regime when $\|\boldsymbol{\varepsilon}\|_2 \rightarrow 0$, this gives us the desired asymptotic result. It is not clear if this is also the case in the high noise intensity regime  $\|\boldsymbol{\varepsilon}\|_2 \rightarrow \infty$, that is, if for lower signal-to-noise ratio, $\lambda \propto \|\boldsymbol{\varepsilon}\|_2$ increases fast enough to activate the generative model.

In practice, however, the noise intensity is not known in advance. In this case, one possible solution is to let the parameter $\lambda$ vary throughout the iterative reconstruction process, depending on the size of the data fitting term $\|\mathcal{A}({\bf f})-{\bf y}\|_2^2$.
\end{remark}
\section{Numerical results}
\label{results}
One can view \eqref{eq: conventional}, \eqref{eq: generative}, and \eqref{eq:combined} as partial cases of the generic non-linear optimization problem of the form
\begin{equation}\label{eq: unified formulation}
\min_{{\bf x}\in \mathbb{C}^d} \|\mathcal{A}\circ \mathcal{B}({\bf x}) - {\bf y}\|_2^2 + \lambda^2 \|{\bf w}\odot {\bf x}\|_2^2.
\end{equation}
We retrieve the specific instances as follows:
\begin{itemize}
\item \eqref{eq: conventional} by letting $d = n$, ${\bf x}\coloneq {\bf f}$, $\mathcal{B} = \mathcal{I}$, and $\lambda = 0$;
\item \eqref{eq: generative} by letting $d = k$, ${\bf x}\coloneq {\bf z}$, $\mathcal{B} = \mathcal{G}$, and $\lambda = 0$;
\item \eqref{eq:combined} by letting $d = k + n$ with ${\bf x} = ({\bf x_1}, {\bf x_2})$, where ${\bf x_1}\in \mathbb{C}^k$ and ${\bf x_2}\in \mathbb{C}^n$. We define $\mathcal{B}:\mathbb{C}^{n+k} \rightarrow \mathbb{C}^n$ as $\mathcal{B}({\bf x_1}, {\bf x_2}) = \mathcal{G}({\bf x_1}) + {\bf x_2}$, and $w\in \mathbb{C}^{k + n}$ is defined by $w(t) = 0$ for $t\in [k]$ and $w(t) = 1$ for $t\in [k + n]\setminus [k]$. The optimization problem~\eqref{eq:combined} then follows by taking ${\bf x_1} \coloneq {\bf z}$ and ${\bf x_2} \coloneq {\bf f} - \mathcal{G}({\bf z})$.
\end{itemize}

To obtain the object reconstruction ${{\bf\tilde f} = \mathcal{B}({\bf\tilde z})}$, we solve~\eqref{eq: unified formulation} using a Quasi-Newton method, such as limited-memory Broyden-Fletcher-Goldfarb-Shanno (L-BFGS) algorithm~\cite{liu1989limited}.
The reconstruction algorithm we employ thus has parameters $\mathcal{B}, \lambda, {\bf w}$, and a stopping tolerance for L-BFGS.

\subsection{Experiment}
For our numerical experiment, we define the measurement map as
\[
\mathcal{A}({\bf f}) = |A{\bf f}|^2,
\]
where matrix $A$ represents a masked Fourier transform, corresponding to the measurements with $\ell$ different probes 
\[
A = \left(\begin{matrix} F\text{diag}({\bf a}_1) \\ F\text{diag}({\bf a}_2) \\ \vdots \\ F\text{diag}({\bf a}_\ell)\end{matrix}\right).
\]
Here, $F\in\mathbb{C}^{n\times n}$ is the discrete Fourier transform matrix and ${\bf a}_i \in \mathbb{R}^{n}$ are random binary probes, that is, their entries are independent identically distributed Bernoulli random variables. The number of measurements is thus $m=n\cdot\ell$.

We define the generative model as
\[
\mathcal{G}({\bf z}) = G{\bf z} + {\bf b},
\]
where ${\bf b}\in\mathbb{C}^n$ and $G\in\mathbb{R}^{n\times k}$
are obtained by principle component analysis of a data set of handwritten digits \cite{deng2012mnist}. The elements of the data set are $8\times 8$ images, so that $n=64$. For our experiment, we choose $k = 30$.

\begin{figure}
\centering
\includegraphics[scale=.45]{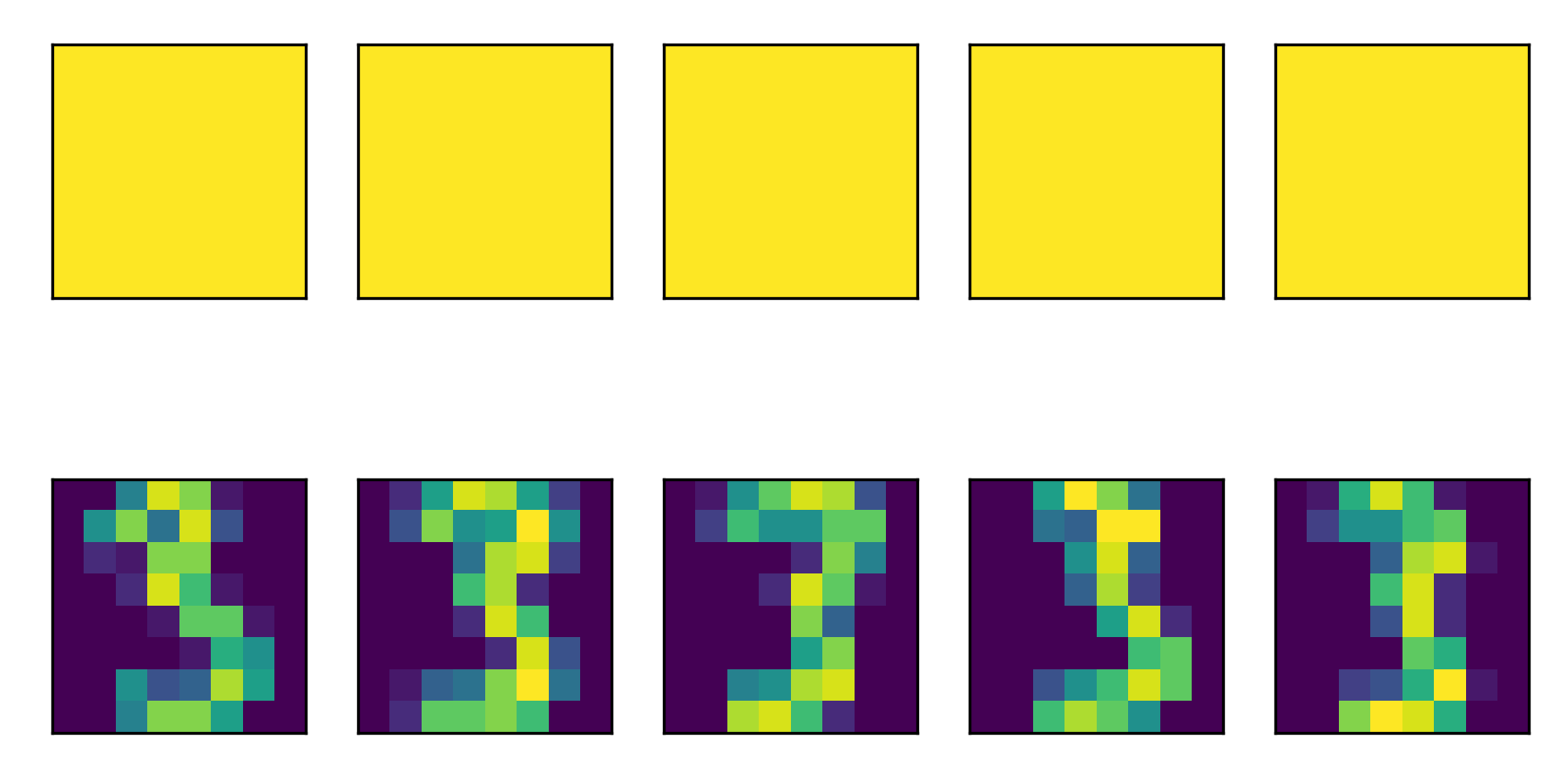}
\caption{Samples of the data set on which the generative model was trained ($n=64$). The top row displays the real part while the bottom row displays the imaginary part.}
\label{fig:example1_data}
\end{figure}
\begin{figure}
\centering
\includegraphics[scale=.45]{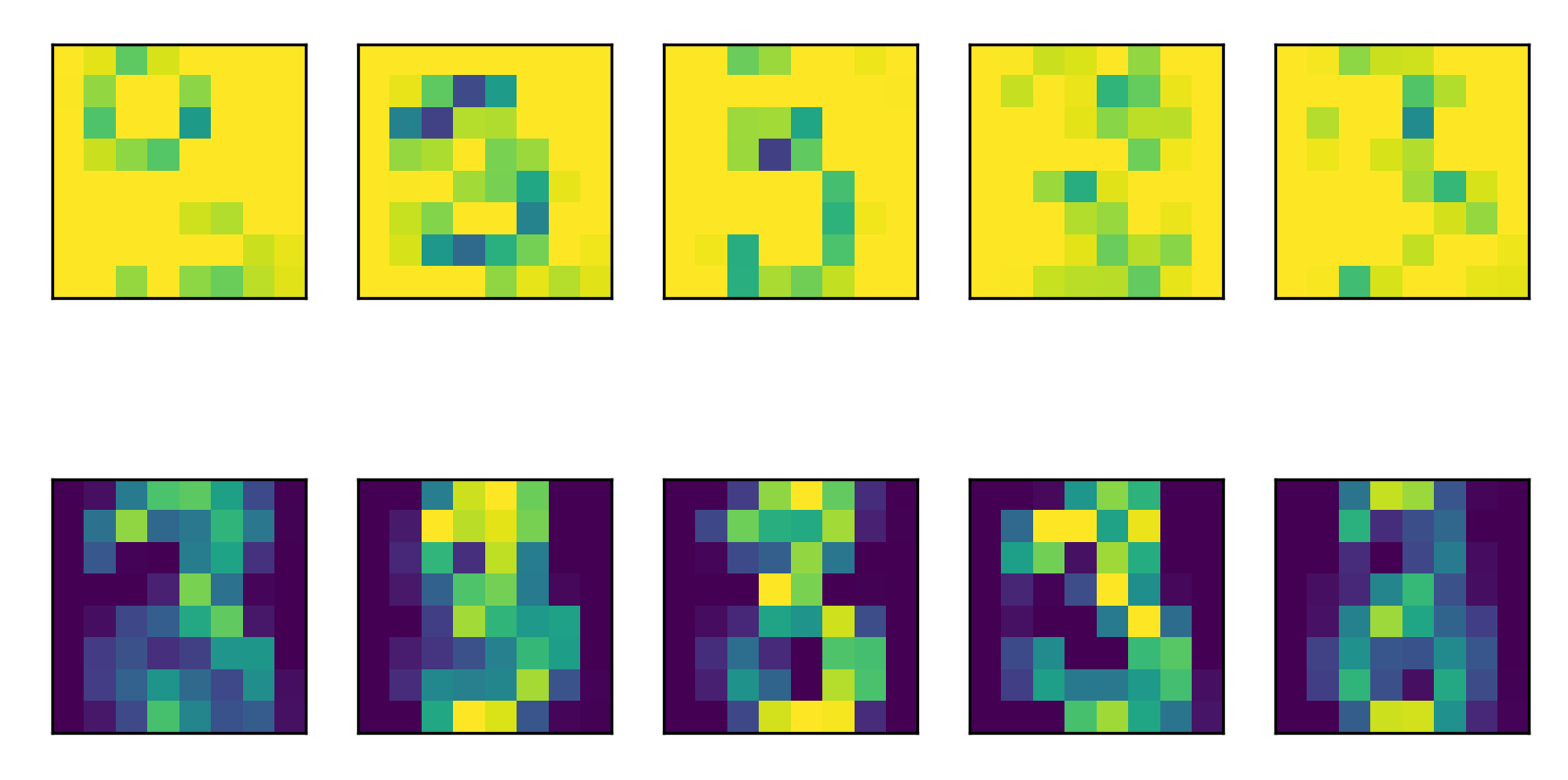}
\caption{Samples generated by the generative model ($k=30$) The top row displays the real part while the bottom row displays the imaginary part.}
\label{fig:example1_generative}
\end{figure}

Examples of the elements of the data set on which the generative model $\mathcal{G}$ is trained are shown in Figure~\ref{fig:example1_data}. Figure~\ref{fig:example1_generative} shows examples of the signals $\mathcal{G}({\bf z})$, ${{\bf z}\in \mathbb{C}^{30}}$, obtained from the trained generative model.  To define the measurement map $\mathcal{A}$, we use $\ell=100$ randomly generated binary probes ${\bf a_j}$, $j\in [\ell]$. 
For the numerical experiment, we generate measurements with additive Gaussian noise $\boldsymbol{\varepsilon}\sim \mathcal{N}(0, \sigma^2 I_m)$. 

To make the reconstruction methods~\eqref{eq: conventional} and~\eqref{eq: generative} more stable to the measurement noise, we introduced additional Tikhonov regularization term by setting $\lambda = \sigma^2$ and ${\bf w} = {\bf 1}$ in~\eqref{eq: unified formulation}. For the reconstruction method~\eqref{eq:combined}, we set $\lambda = 10\cdot\sigma^2$ and  ${\bf w}={\bf 1}$, which also introduces additional Tikhonov regularization term for ${\bf z}$. The reconstruction errors for the three methods, tested on both in-distribution (that is, the ground truth is generated by the generative model) and out-of-distribution (where we use samples from the original data set as ground truth) data are shown in Figure~\ref{fig:example1_recon}.

We see that for high signal-to-noise ratio levels, all methods perform well on in-distribution data, and that for low signal-to-noise ratio levels the generative model shows a slight advantage. On out-of-distribution data, the generative approach~\eqref{eq: generative} clearly shows the bias in the error for high signal-to-noise ratio regime. The combined method~\eqref{eq:combined} achieves the best result for both low and high signal-to-noise ratio levels.
\begin{figure}
\begin{tabular}{c}
\includegraphics[scale=.5]{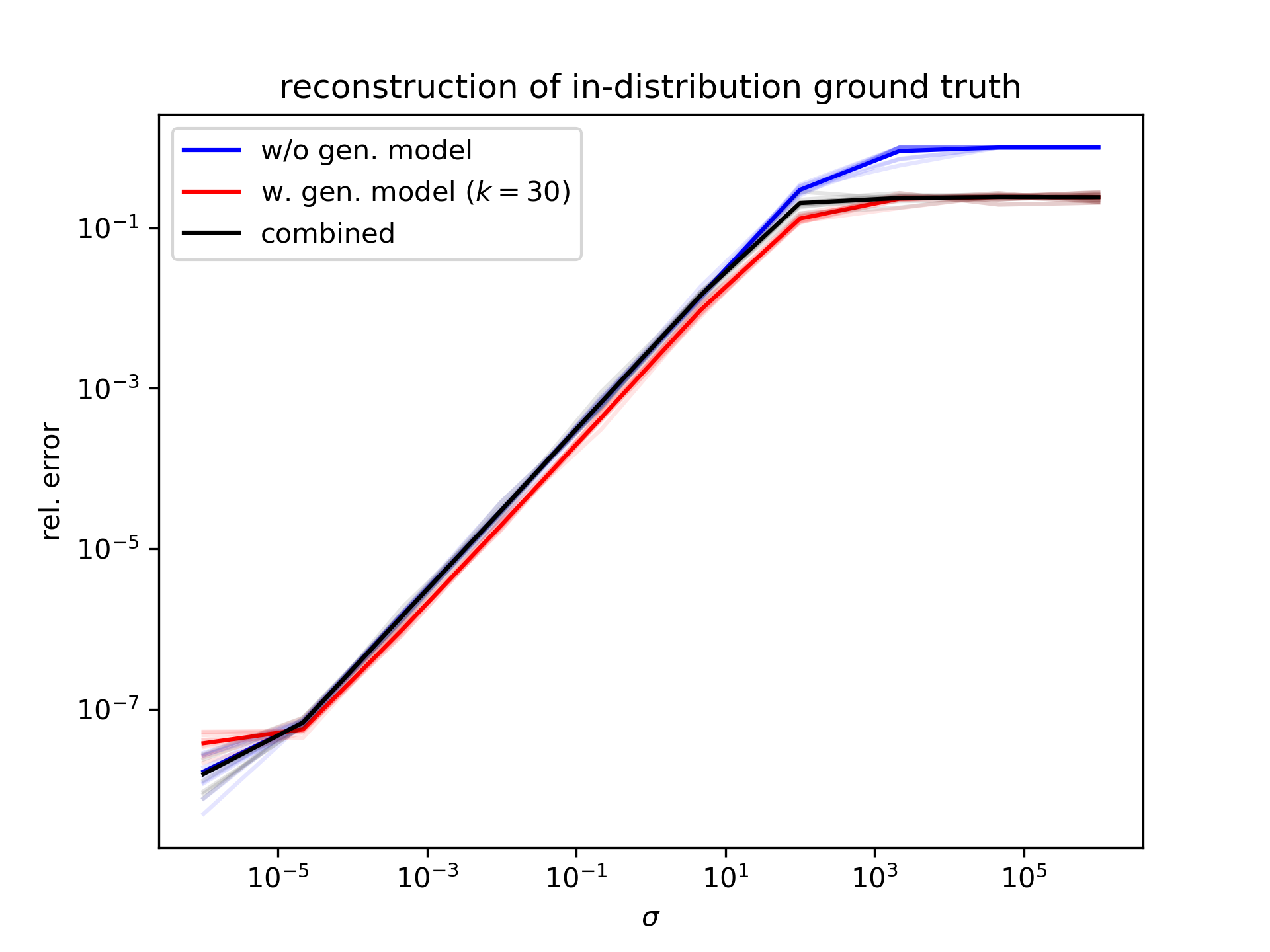}\\
\includegraphics[scale=.5]{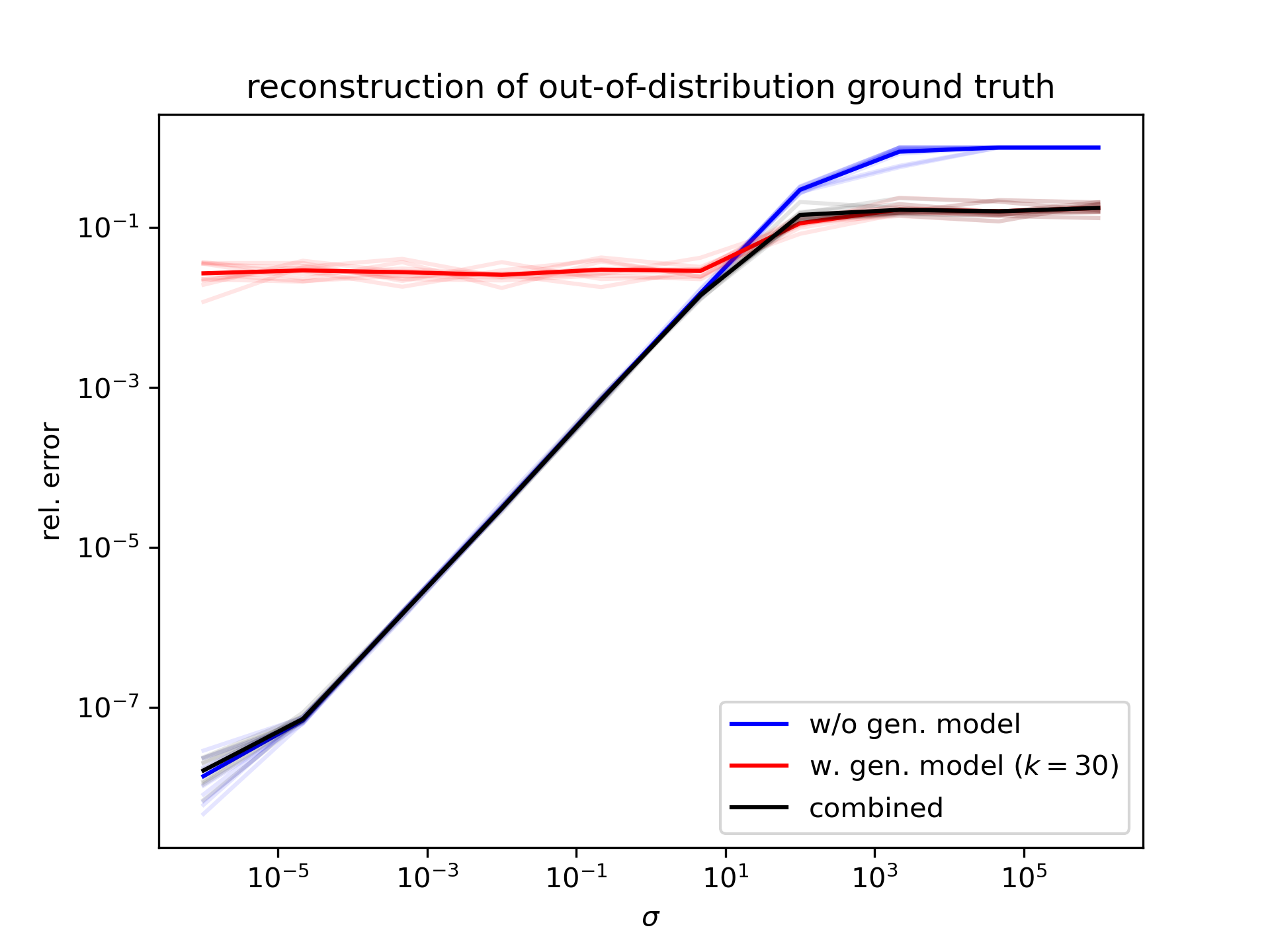}
\end{tabular}
\caption{Relative reconstruction error for the three methods for varying signal-to-noise ratio levels, tested on both in-distribution and out-of-distribution scenarios.}
\label{fig:example1_recon}
\end{figure}

\section{Conclusion and discussion}
\label{conclusion}
In this paper, we explored the use of generative models to regularize certain inverse problems, such as phase retrieval. These preliminary results we showed indicate that generative priors can indeed improve the robustness of the inverse problem solution to measurement noise, at the expense of introducing a bias in the reconstruction. To mitigate this issue, we propose a method that aims to combine the best characteristics of both conventional and regularized methods by interpolating between them. Numerical results on phase retrieval from masked Fourier measurements show that the combined method can indeed achieve the best results.
However, the presented error bounds are rather crude and can probably be improved with more careful analysis methods. Further research is needed to solidify our understanding of the combined method, to refine it, and to make it feasible for high-dimensional problems.

\bibliographystyle{plain}
\bibliography{SampTA_PtyGenography}

\end{document}